
\typeout{IJCAI-17 Instructions for Authors}


\documentclass{article}
\usepackage{ijcai17}

\usepackage{times}

\usepackage{graphicx,amssymb,amstext,amsmath, amsthm}
\usepackage{color}
\usepackage{algorithm}
\usepackage{algpseudocode}
\usepackage{framed}
\usepackage{subfig}

\newcounter{tempcounter}

\newcommand{\citep}[1]{\citeauthor{#1} (\citeyear{#1})}
\newcommand{\bs}[1]{\boldsymbol{#1}}

\newcommand{\Es}{\mathcal{E}}
\newcommand{\Ex}{\mathbb{E}}

\newtheorem{thm}{Theorem}

\newcommand{\B}[1]{\textbf{#1}}

\newcommand{\Bu}{\mathcal{B}}
\newcommand{\Index}{\mathcal{I}}

\DeclareMathOperator{\I}{I}

\DeclareMathOperator{\Prob}{Pr}





\title{XOR-Sampling for Network Design with Correlated Stochastic Events}
\author{Xiaojian Wu, }

\author{Xiaojian Wu$^*$ ~~~~ Yexiang Xue\thanks{equal contribution.}  ~~~~ Bart Selman ~~~~ Carla P. Gomes  \\[4pt]
    Department of Computer Science, Cornell University, Ithaca, NY \\ [4pt] 
  {\tt xiaojian@cornell.edu} ~~~~ {\tt \{yexiang,selman,gomes\}@cs.cornell.edu }
  }

\begin{document}

\maketitle

\begin{abstract}
Many network optimization problems can be formulated as stochastic network design problems in which edges are present or absent stochastically. Furthermore, protective actions can guarantee that edges will remain present.  We consider the problem of finding the optimal protection strategy under a budget limit in order to maximize some connectivity measurements of the network. Previous approaches rely on the assumption that edges are independent. 
In this paper, we consider a more realistic setting where multiple edges are not independent due to natural disasters or regional events that make the states of multiple edges stochastically correlated. 
We use Markov Random Fields to model the correlation and define a new stochastic network design framework. We provide a novel algorithm based on Sample Average Approximation (SAA) coupled with a Gibbs or XOR sampler. The experimental results on real road network data show that the policies produced by SAA with the XOR sampler have higher quality and lower variance compared to SAA with Gibbs sampler.
\end{abstract}

\section{Introduction}

Many problems such as influence maximization~\cite{Kempe03}, river
network design~\cite{neeson2015enhancing} and pre-disaster
preparation~\cite{peeta2010pre} can be formulated as stochastic
network design problems, in which the edges of the network are present
or absent stochastically.
The goal is to take protection actions, which reinforce the
connectivity of a subset of edges with a limited budget, to maximize
the connectivity of the network under stochastic events.
For example, in the road network design problem, as shown in
Figure~\ref{fig:road_network}, edges can be present or absent from
the network stochastically when a natural disaster strikes. The
problem then is to invest smartly under a limited budget to strengthen
a few key road segments, so that ambulance centers can still remain
connected to population centers under the attacks of multiple natural
disasters.

Despite the worst-case $\mbox{PP}$-hard
complexity to evaluate the connectivity values under a fixed protection plan~\cite{Dyer2006StochasticProgramming}, approximate algorithms based on
Sample Average Approximation (SAA) have been developed to find nearly
optimal solutions~\cite{Sheldon10,Wu2015}, which convert the original
stochastic optimization problem into a deterministic one, by
optimizing the connectivity objectives over a fixed set of sample
scenarios.

The success of previous approaches depends on the core assumption that
the state of each edge (e.g., present or absent) is independent of
those of other edges. Under this assumption, high-quality samples can
be drawn with simple schemes such as flipping a biased coin for each
edge. Also, it has been  shown that often a small number of
samples (e.g., 10) is sufficient to find a good
policy~\cite{Sheldon10}, despite the fact that a much larger number of samples is
needed for theoretical guarantees~\cite{SAA02}.

Nevertheless, the states of edges are often correlated in
practice. For example, in the road network design problem shown in
Figure~\ref{fig:road_network}, edges tend to fail concurrently when a
regional event (e.g., a natural disaster such as an earthquake or a
snow storm) disables the connectivity of all edges in a given area.
In this paper, we study this more realistic setting where the states
of multiple edges can be correlated.  We propose to use a general
probabilistic model -- Markov Random Field (MRF) -- to model the joint
distribution of edge states, which can be built with domain knowledge
or learned from data.  

We make several contributions: {\bf(1)} We
propose a new stochastic network design problem framework, taking into
account correlated stochastic events based on MRF; {\bf(2)} We extend
the SAA framework with a novel sampling scheme to convert the original
stochastic network design problem into a deterministic network design
problem, which is formulated as a Mixed Integer Program (MIP) and
solved by CPLEX 12.6.  As we will show, also mentioned in
paper~\cite{Kumar12}, the MIP can only scale to a small number of
samples (e.g., $30$ samples).  Consequently, a \emph{small} but
\emph{good quality} set of samples is critical to produce high-quality
solutions; {\bf(3)} Another  key contribution of this
work is the  coupling of 
XOR-sampling~\cite{Gomes2006Sampling,ermon2013embed} with SAA;
{\bf(4)} We show that XOR-sampling coupled with SAA outperforms the
standard Gibbs sampler when coupled with SAA.  As shown in
paper~\cite{ermon2013embed}, the XOR sampler can produce MRF samples
with better quality but often has worse scalability than the Gibbs
sampler.
However, in our setting, the bottleneck of the runtime is in solving
the MIP, not the time required by the XOR sampler. This is because the
MIP solver can only handle relatively small samples (therefore they
have to be carefully selected). Given the relatively small sample
sizes, the time required by the XOR sampler to produce such samples is
negligible. {\bf(5)} We test our algorithms on the flood preparation
problem with real-world road networks~\cite{xiaojian2016}.  The goal
is to maintain the connectivity of road networks for emergency medical
services (EMS).  The results show that SAA with XOR sampling produces
policies that are never inferior in quality compared to SAA with
Gibbs. In other words, the policies produced by the SAA with XOR
sampling consistently have similar or better quality than SAA with
Gibbs sampling. Furthermore, in some cases, a small number of samples
(e.g., $10$) from the XOR sampler, is  enough for the SAA to
converge. Also, the solution quality with XOR sampling is 
consistently good, not very sensitive to the variance of samples used in
SAA.  In comparison, the solution quality  produced by a Gibbs
sampler has a higher variance when the number of samples used in SAA
is small.  These observations suggest that, on large-scale networks,
the SAA method can still produce a good policy if we use high-quality
samples such as those from XOR sampling.

\section{Problem Statement}
One important application of the stochastic network design problem is road network design~\cite{peeta2010pre}. 
Natural disasters such as earthquakes and floods break road segments and destroy the connectivity of the whole road network.
The paper~\cite{xiaojian2016} introduces the problem of flood preparation for EMS as shown in Fig.\,\ref{fig:road_network}.
In this problem, during floods, roads may be washed out and paths connecting ambulance centers to patients are cut off.
\begin{figure}[t]
\centering
\includegraphics[height=40mm]{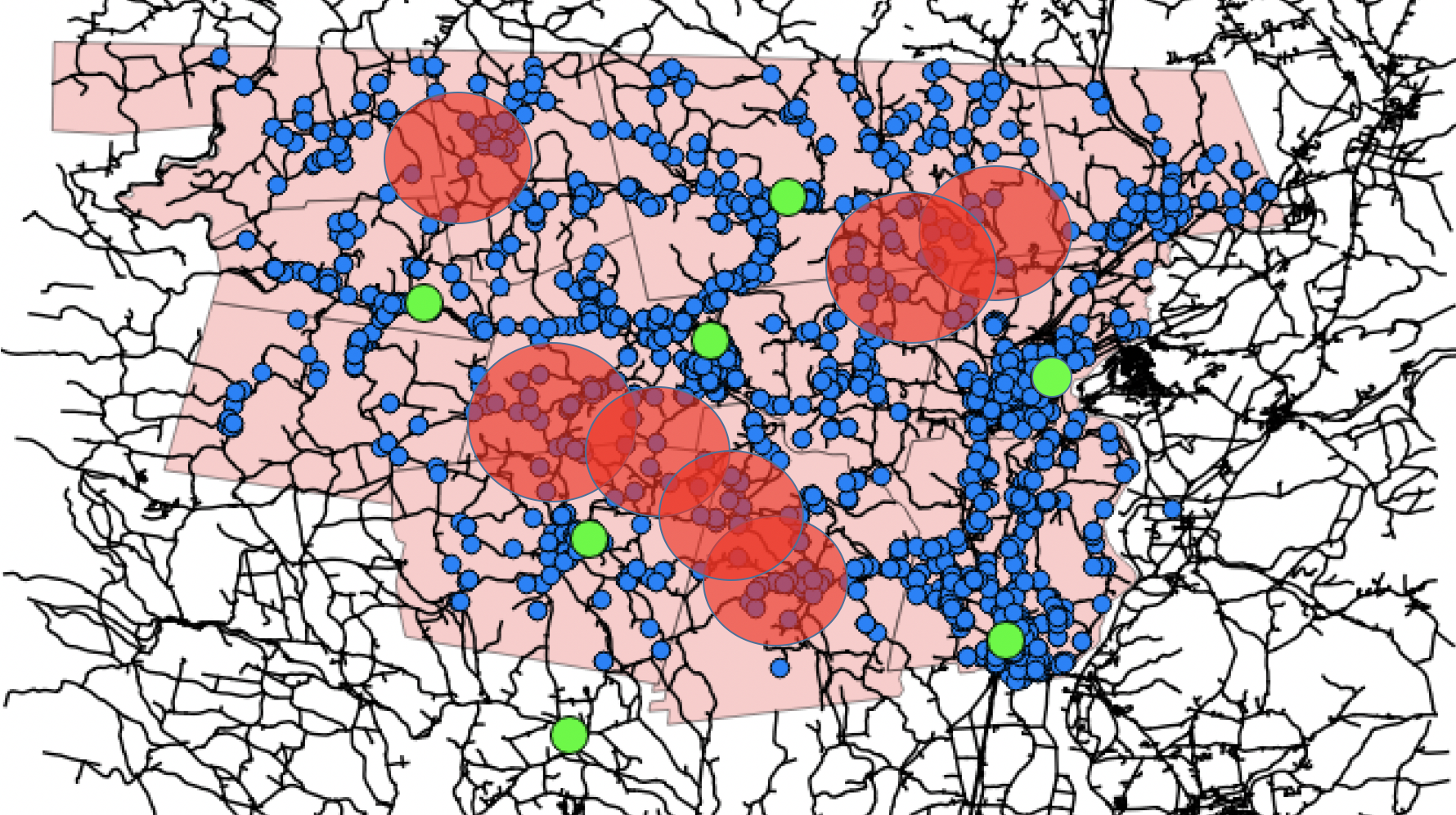}
\caption{In an exemplified road network design problem, the goal is to maintain connectivity between ambulance centers (green dots) and population centers (blue dots) in case of natural disasters. The stochasticity of multiple edges is correlated. For example, one natural disaster such as earthquake may affect the connectivity of all edges in its range (shown in red circles).}
\label{fig:road_network}
\end{figure}
To maintain good connectivity of the road network in case of an emergency, spatially critical roads should be identified and protected beforehand, at certain costs.
The question is how to find these spatially critical roads, given a budget limit, such that most of the patients can be reached when floods happen.
This problem is an example of the stochastic network design problem.

A stochastic network design problem is defined on a directed graph $G=\{V, E\}$ with a source node $s$.
Each node is associated with a weight $w(v)$ (e.g., population density).
Each edge $e\in E$ is associated with a binary random variable $\bs{\theta}_e$ that describes the state of the edge during disasters.
$\bs{\theta}_e = 1$ means that the edge $e$ is \emph{present} or not destroyed.
$\bs{\theta}_e = 0$ means that $e$ is \emph{absent} or destroyed. 
Let $\bs{\theta} = \{\bs{\theta}_e, e\in E\}$ denote the vector of random variables of all edges, each random variable taking value in $\{0,1\}^{|E|}$ and $\Prob(\bs{\theta})$ denote their joint probability distribution.
$\Prob(\bs{\theta})$ may be constructed from domain knowledge or learned from real-world data.
Note that we use the bold symbol $\bs{\theta}$ for random variables and the unbold symbol $\theta$ for their values.
For a given $\theta$, the indicator function $\I(s \leadsto v | \theta)$ indicates whether node $v$ is reachable from the source $s$ in the network by a path of all present edges. 
If node $v$ is reachable, it equals to $1$. Otherwise, it equals to $0$.
The expected weighted number of nodes that are reachable from the source $s$ is written as
\begin{align}
{\small
\Ex_{\bs{\theta}}\left[\sum_{v\in V} \I(s \leadsto v | \bs{\theta}) w(v)\right] = \sum_{\theta \in \{0,1\}^{|E|}}\Prob(\theta)\sum_{v\in V} \I(s \leadsto v | \theta) w(v)}
\label{eq:objective}
\end{align}

To be general, one protection action can protect one or multiple edges.
If an edge is protected, it is always present. 
Formally, we are given a collection of $M$ edge sets $\Es = \{E_1, E_2,..., E_M\}$ where each $E_s \subseteq E$ represents an action to protect all edges in $E_s$ and is associated with a cost $c_s$.
A policy $\pi$ is a subset of $\Es$ that indicates all edges being protected. 
Protecting an edge from $u$ to $v$ can be equivalently viewed as adding an edge $(u,v)$ that is always present in the network.
That is, if $(u, v)$ is protected by $\pi$, $u$ is connected to $v$ no matter $\theta_{(u,v)} = 1$ or $0$, which makes the indicator function depend on $\pi$ as $\I(s \leadsto v | \theta, \pi)$.
In other words, protection actions only change the network structure but not $\Prob(\bs{\theta})$, the probability of the stochastic events such as natural disasters that is independent of the network structure.
The decision making problem is to find the policy $\pi$, subject to a budget limit $\Bu$, to maximize the expected weighted number of nodes that are reachable from $s$, that is
\begin{align}
{\small
\max_{\pi \subseteq \Es} \Ex_{\bs{\theta}}\left[\sum_{v\in V} \I(s \leadsto v | \bs{\theta}, \pi) w(v)\right] ~~~~s.t. ~~~\sum_{E_s\in \pi} c_s \leq \Bu
}
\label{eq:max_exp}
\end{align}

This problem definition is general and subsumes many special cases. For example, in the real-world problem in our experiments, only a subset of edges are present or absent stochastically while the remaining edges are known to be present. For this special case, we can set the protection costs of those edges that are always present to zero so they are protected automatically in the optimal solution.

\vspace{5pt}
\noindent \B{Multiple Source Case~} In problem~(\ref{eq:max_exp}), it is assumed that there is only one source. 
It can be extended to multiple sources.
For example, there are several ambulance centers, each of which is represented by a source. The house of a patient is encouraged to be reachable from multiple centers in case that ambulences in one center are busy and can't respond.
In this new problem formulation, instead of having a single source $s$, we are given a set of sources $S$.
Connecting to more sources increases the objective value accordingly.
For each source in $S$, we count the weighted number of nodes that are reachable from the source separately.
The objective is to maximize the expected total weighted number of nodes that are reachable from each source in $S$, that is, 
\begin{align}
{\small \max_{\pi \subseteq \Es} \sum_{s\in S} \Ex_{\bs{\theta}}\left[\sum_{v\in V} \I(s \leadsto v | \bs{\theta}, \pi) w(v)\right] ~~~~s.t. ~~~\sum_{E_s\in \pi} c_s \leq \Bu }
\label{eq:multi_source}
\end{align}
In this case, each node in $S$ issues a separate network flow to each other node.
Later, we will introduce the algorithm to solve single source network design problems and it can be extended to solve multiple source problems.

\vspace{5pt}
\noindent\B{Model $\Prob(\bs{\theta})$~} We propose to use the Markov random field (MRF) to model the joint probability distribution $\Prob(\bs{\theta})$ of all variables.
MRF is a general model for the joint distribution of multiple correlated random variables.
Using notions in MRF, we define the probability of edge states $\Prob(\bs{\theta})$ as: 
\begin{align}
\Prob(\bs{\theta}) = \frac{1}{Z} \prod_{\alpha \in \Index } \psi_{\alpha}(\{\bs{\theta}\}_{\alpha})
\label{eq:prob}
\end{align}
where $\bs{\theta}_{\alpha}$ is a subset of variables in $\bs{\theta}$ that the function $\psi_{\alpha}$ depends on. 
$\psi_{\alpha}: \{\bs{\theta}\}_{\alpha} \rightarrow \mathcal{R}^{+}$ is a potential function, which maps every assignment of variables in $\{\bs{\theta}\}_{\alpha}$ to a non-negative real value. 
$\Index$ is an index set.
$Z$ is a normalization constant, which ensures that the probability adds up to one: $Z = \sum_{{\theta} \in \{0,1\}^{|E|}}\prod_{\alpha \in \Index } \psi_{\alpha}(\{\theta\}_{\alpha})$. In this paper, we will also use $\widetilde{\Prob}(\theta)$ to represent the unnormalized probability density:
\begin{align}
\nonumber\widetilde{\Prob}(\theta) = \prod_{\alpha \in \Index } \psi_{\alpha}(\{\bs{\theta}\}_{\alpha} = \{{\theta}\}_{\alpha} )
\end{align}

A potential function $\psi_{\alpha}(\{\bs{\theta}\}_{\alpha})$ defines the correlation between all variables in the subset $\{\bs{\theta}\}_{\alpha}$. 
As a simple example, suppose $\{\bs{\theta}\}_{\alpha}$ contains the subset of variables for all network edges within a small spatial region. 
All roads in set $\{\bs{\theta}\}_{\alpha}$ may be either all present or all absent, depending on whether a regional event (e.g., an earthquake) happens or not.
In this case, $\psi_{\alpha}(\{\bs{\theta}\}_{\alpha})$ will take high values when all variables are $1$ or $0$ simultaneously. 
The structure of the MRF or the set $\Index$ can be built from domain knowledge and potential functions can be learned from real-world data.
The focus of this paper is not on how to construct the MRF but is how we solve problem in Equation~\ref{eq:multi_source} in general when $Pr(\bs{\theta})$ is given in the form shown in Equation~\ref{eq:prob}.

\begin{thm}
For given policy $\pi$, evaluating the objective of problem~(\ref{eq:multi_source}) is $\#P$ complete.
\end{thm}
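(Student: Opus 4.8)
The plan is to establish the two halves of $\#P$-completeness separately. For \emph{membership}, assume (as is standard) that the potentials $\psi_\alpha$ and the non-negative node weights $w(v)$ are rationals of polynomially bounded bit-size, and clear denominators so that the object to be evaluated is the unnormalized quantity $N \equiv \sum_{\theta\in\{0,1\}^{|E|}} \widetilde{\Prob}(\theta)\sum_{s\in S}\sum_{v\in V}\I(s\leadsto v\,|\,\theta,\pi)\,w(v)$, which is $Z$ times the objective of problem~(\ref{eq:multi_source}). I would exhibit a nondeterministic polynomial-time machine whose number of accepting computations is exactly $N$: it guesses an edge assignment $\theta$, then an integer in $\{1,\dots,\widetilde{\Prob}(\theta)\}$ (valid because $\widetilde{\Prob}(\theta)$ is a product of polynomially many potential values and hence has polynomially many bits), then a source $s\in S$, a node $v\in V$, and an integer in $\{1,\dots,w(v)\}$, and it accepts iff $v$ is reachable from $s$ in $G$ under $\theta$ with the edges protected by $\pi$ forced present --- a condition checkable in polynomial time by breadth-first search. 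Thus $N\in\#P$; since $Z=\sum_{\theta}\widetilde{\Prob}(\theta)$ is computed by an analogous $\#P$ machine, evaluating~(\ref{eq:multi_source}) is polynomial-time Turing equivalent to evaluating a $\#P$ function.

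For \emph{hardness}, I would reduce from the $\#P$-complete problem of counting the edge-subsets of a directed graph in which a vertex $t$ is reachable from a vertex $s$ (two-terminal reliability counting; $\#P$-complete by the classical results of Valiant and of Provan and Ball). Given an instance $(H,s,t)$, build a stochastic network design instance with $G=H$, single source set $S=\{s\}$, all potentials trivial ($\psi_\alpha\equiv 1$, so that $\Prob(\bs{\theta})$ is uniform on $\{0,1\}^{|E|}$ and $Z=2^{|E|}$), weights $w(t)=1$ and $w(v)=0$ for $v\neq t$, and fixed policy $\pi=\emptyset$. Then the objective of~(\ref{eq:multi_source}) equals $2^{-|E|}\cdot\#\{\theta : s\leadsto t \text{ under }\theta\}$, so one evaluation (times $2^{|E|}$) recovers the base count; the reduction is clearly polynomial. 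Because the constructed instance is a legitimate MRF, the correlated problem is $\#P$-hard a fortiori --- the hardness is already present in the independent-edge special case.

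The delicate step is the membership direction: the expectation in~(\ref{eq:multi_source}) is in general a rational number rather than an integer, so one must either phrase the counting problem in the unnormalized, denominator-cleared form used above, or state $\#P$-completeness with respect to polynomial-time Turing reductions and treat $Z$ as a separate $\#P$ oracle call. The hardness direction is routine once the right base problem is fixed; the only care needed is to use a \emph{directed} two-terminal reachability count (or to route through the undirected $s$--$t$ reliability result) so that the reduction matches the directed-graph model of the paper.
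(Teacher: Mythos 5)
Your hardness argument is essentially the paper's: the paper's one-line proof also specializes the MRF to the independent-edge case ("singular" potentials) and invokes the $\#P$-completeness of two-terminal graph reliability, which is exactly your reduction with trivial potentials, $w(t)=1$, and $\pi=\emptyset$. Where you go beyond the paper is the membership half: the paper asserts completeness but only argues hardness, whereas you supply a $\#P$ machine for the denominator-cleared numerator $N$ and for $Z$, and you correctly flag the real formal wrinkle --- the objective of~(\ref{eq:multi_source}) is a rational expectation, not an integer count, so "$\#P$-complete" only makes sense after clearing denominators or under polynomial-time Turing reductions with $Z$ as a separate oracle call. That observation is a genuine improvement in rigor over the published proof; both halves of your argument are sound, and the hardness direction coincides with the paper's.
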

\begin{proof}
  This is a consequence of the $\#P$-completeness of the graph reliability problem \cite{Dyer2006StochasticProgramming}, which is to compute the probability that two given vertices is connected if the edges are present or absent independently. Our problem simplifies to the graph reliability problem  when all potential functions are singular. 
\end{proof}


\vspace{5pt}
\noindent \B{Other Applications~} The stochastic network design problem~(\ref{eq:max_exp}) can be used to model other network optimization problems in a more realisitc way by allowing correlation between random events. Here are two examples.

Barrier removal problem~\cite{wu2014rounded,wu2017robust} is an important problem in computational sustainability where the goal is to maximize the expected amount of habitat that one randomly selected group of fish can reach. 
Previous problem formulation assumes that the probability that a group of fish is capable of passing a barrier is independent of the probabilities of passing other barriers.
However, in practice, the probability for a group of fish to pass a barrier mainly depends on the structure of the barrier and the capability of the fish. Therefore, it is more realistic that a fish is able to pass multiple structurally similar barriers or pass none of them. 
Our algorithm introduced later can be applied to solving the barrier removal problem with correlated passage probabilities as well.

Influence maximization problem~\cite{Kempe03} is a well-known problem in social computing, in which people who are infected by an influence will continue to infect their friend stochastically.
The independent cascade model assumes that each person will infect another person with some probability independent from  infections from other sources.
Nevertheless, the infections coming from multiple friends can be correlated, since friends from the same social group tend to post messages of similar content.

\section{Our Method}

\begin{figure}[t]
\setcounter{tempcounter}{\value{equation}}
\setcounter{equation}{0}
\begin{framed}
{\scriptsize
\begin{align}
  &\label{mip:obj}\max_{y}~~~~ \frac{1}{N}\sum^N_{i=1} \sum_{v \in V} w(v)\ z^i_v \\
& \nonumber \text{subject to} \\
& \label{mip:con1}\sum_{(r, v) \in E} \!\!\! x^i_{rv}  - \!\!\!\sum_{(u, r) \in E} \!\!\! x^i_{ur} = \left \{
\begin{array}{l l}
 \sum_{k\neq s} z^i_k & \!\! \text{if }  r = s\\
 - z^{i}_r & \!\! \text{if }  r\neq s
\end{array}
\right. \!\!\!\ \forall r \in V, ~i=\! 1:\! N\\
&\label{mip:con2} 0\leq x^i_{e} \leq (|V| - 1) \left(\sum_{E_s: e\in E_s} y_{E_s} + \theta^i_e \right)~~~~~~~~\forall E_s \in \Es, ~i=\! 1:\! N \\
&\label{mip:con4} 0\leq x^i_{e} \leq |V|-1 ~~~~~i=1:N, ~\forall e\in E\\
&\label{mip:con3} \sum_{E_s\in \Es} c_s y_{E_s} \leq \Bu \\
&\label{mip:con5} y_{E_s} \in \{0, 1\} ~~~\forall E_s \in \Es ~~~~~~ z^i_v \in [0, 1] ~~~\forall i=1:N, \forall v\in V
\end{align}
}
\end{framed}
\caption{\small A compact MIP encoding of problem~(\ref{eq:saa}).}
\label{fig:MIP}
\setcounter{equation}{\value{tempcounter}}
\end{figure}

In this chapter, we introduce the algorithm to solve the single source network design problems (\ref{eq:max_exp}) with 
$\Prob(\bs{\theta})$ defined by (\ref{eq:prob}).
The algorithm can be extended to solve the multiple source network problem (\ref{eq:multi_source}) with a little modification to the MIP formulation. 
We use the Sample Average Approximation~\cite{SAA02} (SAA) method to solve problem~(\ref{eq:max_exp}).
The basic idea is to draw $N$ samples $\{\theta^1, ...., \theta^N\}$ of $\bs{\theta}$ from the distribution $\Prob(\bs{\theta})$ and convert the stochastic optimization problem~(\ref{eq:max_exp}) into the following deterministic optimization problem: 
\begin{align}
{\small \max_{\pi \subseteq \Es} \frac{1}{N} \sum^N_{i=1} \sum_{v \in V}\I(s \leadsto v | \theta^i, \pi) w(v) ~~~~s.t. ~~~\sum_{E_s\in \pi} c_s \leq \Bu }
\label{eq:saa}
\end{align}

\begin{thm}
Let $\pi^*$ be the optimal policy of problem~(\ref{eq:saa}) with $N$ samples and $\pi^{OPT}$ be the optimal policy of problem~(\ref{eq:max_exp}). The number of samples for $\pi^*$ to converge to $\pi^{OPT}$ is $\Theta(|\Es|)$~\cite{SAA02}.
\end{thm}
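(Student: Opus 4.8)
The plan is to follow the classical Sample Average Approximation analysis of \cite{SAA02}, specialized to the fact that here the feasible region is finite.

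First I would observe that the feasible set of problem~(\ref{eq:saa}) --- the policies $\pi \subseteq \Es$ with $\sum_{E_s \in \pi} c_s \le \Bu$ --- is exactly the same finite set as for problem~(\ref{eq:max_exp}) and has cardinality at most $2^{|\Es|}$. For any fixed $\pi$, write $g(\theta, \pi) = \sum_{v \in V} \I(s \leadsto v \mid \theta, \pi)\, w(v)$; this is a deterministic function of $\theta$ taking values in $[0, W]$ with $W = \sum_{v \in V} w(v)$. Then the true objective $F(\pi) = \Ex_{\bs{\theta}}[g(\bs{\theta}, \pi)]$ and the SAA objective $\widehat F_N(\pi) = \frac{1}{N}\sum_{i=1}^N g(\theta^i, \pi)$ (over i.i.d.\ draws $\theta^1,\dots,\theta^N \sim \Prob(\bs{\theta})$) differ, for each $\pi$, by an average of i.i.d.\ bounded terms, so the whole statement reduces to controlling the uniform deviation $\sup_{\pi} |\widehat F_N(\pi) - F(\pi)|$.

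Second, I would apply Hoeffding's inequality for each fixed $\pi$, $\Prob(|\widehat F_N(\pi) - F(\pi)| \ge \epsilon) \le 2\exp(-2N\epsilon^2 / W^2)$, and take a union bound over the at most $2^{|\Es|}$ feasible policies to get $\Prob(\sup_\pi |\widehat F_N(\pi) - F(\pi)| \ge \epsilon) \le 2\exp(|\Es| \ln 2 - 2N\epsilon^2/W^2)$. Forcing the right-hand side below a target failure probability $\delta$ yields $N \ge \frac{W^2}{2\epsilon^2}\big(|\Es|\ln 2 + \ln(2/\delta)\big) = O(|\Es|)$. On the complementary event, since $\pi^*$ maximizes $\widehat F_N$ and $\pi^{OPT}$ is feasible for~(\ref{eq:saa}), $F(\pi^{OPT}) - F(\pi^*) \le (F(\pi^{OPT}) - \widehat F_N(\pi^{OPT})) + (\widehat F_N(\pi^{OPT}) - \widehat F_N(\pi^*)) + (\widehat F_N(\pi^*) - F(\pi^*)) \le 2\epsilon$. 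Because the feasible set is finite, there is a strictly positive optimality gap $\Delta = \min\{F(\pi^{OPT}) - F(\pi) : \pi \text{ feasible},\ F(\pi) < F(\pi^{OPT})\}$ (or every $\epsilon$ already works if the optimum is attained); choosing $\epsilon < \Delta/2$ then forces $F(\pi^*) = F(\pi^{OPT})$, i.e.\ $\pi^*$ is an optimal policy of~(\ref{eq:max_exp}). This establishes the $O(|\Es|)$ half of $\Theta(|\Es|)$.

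Third, for the matching $\Omega(|\Es|)$ lower bound I would exhibit a family of instances in which the per-sample information about the optimal policy is spread over $\Omega(|\Es|)$ essentially independent coordinates --- for instance, many disjoint parallel gadgets, each carrying one protection decision governed by its own low-probability edge failure --- so that with fewer than a constant fraction of $|\Es|$ samples some coordinate is never ``witnessed'' and the SAA solution deviates with constant probability; this is the construction behind the general bound of \cite{SAA02} instantiated on our network design problem. I expect the main obstacle to be this lower-bound direction together with pinning down the precise sense of ``converges'' (convergence of the objective value versus exact recovery of $\pi^{OPT}$, and the dependence on the gap $\Delta$ and on $W/\epsilon$ that the $\Theta(|\Es|)$ statement suppresses as problem-dependent constants); the upper-bound half is essentially bookkeeping with Hoeffding plus a union bound over the exponentially-many-but-finite policy space.
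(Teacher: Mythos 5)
The paper does not actually prove this theorem --- it is stated as an import from the SAA literature, with the citation to \cite{SAA02} standing in for the argument. Your reconstruction of the $O(|\Es|)$ direction is exactly the argument behind that citation and is correct: the feasible set is finite with at most $2^{|\Es|}$ policies, Hoeffding on the bounded summand $g(\theta,\pi)\in[0,W]$ plus a union bound gives uniform $\epsilon$-accuracy with $N \ge \frac{W^2}{2\epsilon^2}(|\Es|\ln 2 + \ln(2/\delta))$, the three-term telescoping bound gives $F(\pi^{OPT})-F(\pi^*)\le 2\epsilon$ on the good event, and taking $\epsilon$ below half the optimality gap $\Delta$ upgrades value-closeness to exact recovery. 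You are also right to flag that the $\Theta(|\Es|)$ statement silently treats $W$, $\Delta$, and $\delta$ as constants, even though $\Delta$ is instance-dependent and can be small; this is a genuine looseness in the theorem as stated, inherited from how \cite{SAA02}'s bound $N \gtrsim (\sigma^2/\Delta^2)\log(|S|/\delta)$ is being paraphrased. The one piece you leave as a sketch is the $\Omega(|\Es|)$ lower bound, which a two-sided $\Theta$ claim formally requires; your parallel-gadget idea is the right shape for it, but it is not carried out. Since the paper itself offers no argument for either direction, your proposal is strictly more complete than the paper's treatment, and the missing lower-bound construction is the only gap worth closing if the $\Theta$ (rather than merely $O$) is to be defended.
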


Problem~(\ref{eq:saa}) can be formulated as a MIP. 
Using the max-flow encoding from one source to multiple targets, we introduce a more compact MIP formulation than the one in paper~\cite{Wu2015} which uses $O(|V|)$ times more variables ($|V|$ is the number of nodes). 
This new compact formulation is shown in Fig.\,\ref{fig:MIP}. 
The basic idea is as follows. 
For each sample $\theta^i$, and for each node $v$, we define a binary variable
$z^i_v$, which represents whether node $v$ is reachable from $s$ $(=1)$ in the $i^{th}$ sample or not $(=0)$.
Variables $z^i_v$ can be relaxed to be continous variable as shown in Fig.\,\ref{fig:MIP} because the value of $z^i_v$ is either $0$ or $1$ in the optimal solution.
The objective function in Equation~\ref{eq:saa} can be written as:
\begin{align}
\nonumber\frac{1}{N}\sum^N_{i=1} \sum_{v \in V} w(v)\ z^i_v
\end{align}
We then defines an indicator variable $y_{E_s}$ for the action to protect all edges in $E_s$. Protecting edges in $E_s$ costs $c_s$, therefore we have the budget constraint (\ref{mip:con3}) in Fig.\,\ref{fig:MIP}. 

Next, we enforce flow constraints (\ref{mip:con1},\ref{mip:con2},\ref{mip:con4} in Figure~\ref{fig:MIP}) to guarantee that $z_k^i=1$ if and only if node $k$ is reachable from $s$ in the  $i^{th}$ sample. The idea is to inject $\sum_{k\neq s} z^i_k$ unit of flow into the source $s$, and to pull out $z^i_k$ unit of flow ($z^i_k$ is either 1 or 0, depending on whether node $k$ is reachable from $s$ in the $i^{th}$ sample or not) at all nodes other than the source $s$.
Here we use $x_e^i$ as flow variables to represent the flow on edge $e$ in the $i$-th sample. 
Constraint (\ref{mip:con1}) is used to enforce a flow of this kind.
If a node $k$ is reachable from $s$ $(z^i_k = 1)$, one unit of flow will be consumed by node $k$. 
If $k$ is not reachable from $s$ $(z^i_k = 0)$, $k$ will not consume any flow.
Constraint (\ref{mip:con4}) restricts the amount of flow on each edge between 0 and $|V|-1$, which must be true since the total amount of flow is $\sum_{k\neq s} z^i_k$ and we have $\sum_{k\neq s} z^i_k \leq |V|-1$.
Constraint (\ref{mip:con2}) ensures that no flow is allowed on edge $e$ unless $e$ is present in the $i$-th sample ($\theta_e=1$) or $e$ is protected by at least one action $E_s$ that covers $e$ and the corresponding $y_{E_s} = 1$.

SAA is sensitive to the number of samples $N$ used to approximate the objective function. Although in general a large $N$ is preferred in order to reduce the variance, the number of variables in the MIP encoding in Figure~\ref{fig:MIP} scales linearly with respect to $N$, which prevents us from using a large number of samples (e.g., millions of samples) in practice.
Under the independent edge sampling case, despite theoretical results \cite{SAA02} claim that $O(|\mathcal{E}|)$ number of samples are required to obtain probabilistic guarantees, in practice, people found that a small number of samples (e.g., $10$) are sufficient to find a good policy~\cite{Sheldon10}.
We call this small set of samples as \emph{good samples} or \emph{well-representative samples}.
That is, the number of samples is relatively small but these samples can be used to find good policies.
To make SAA work, it is crucial to \emph{obtain a small-size ($N$ small) but well-representative set of samples}, so that not only the quality of the sample approximation is good, but also the size of the sample set is small enough for MIP to scale. 
It is unclear and potentially more difficult to obtain good samples when edges are correlated. 
In the following sections, we introduce two candidate methods to attempt to obtain these so-called good samples $\{\theta^1,...,\theta^N\}$.


\subsection{Gibbs Sampling}

Gibbs sampling is a special case of the Metropolis-Hastings algorithm to obtain a sequence of samples which approximates a given distribution $\Prob(\bs{\theta})$. The high-level idea is to build a Markov chain whose transition probability is 
\begin{small}
\begin{align}
  \Prob(\theta_i | \theta_1, .., \theta_{i-1}, \theta_{i+1}, .., \theta_{|E|}) = \frac{\widetilde{\Prob}(\theta_1, \dots, \theta_{|E|})}{\widetilde{\Prob}(\theta_1, .., \theta_{i-1}, \theta_{i+1}, .., \theta_{|E|})}.\nonumber
\end{align}
\end{small}
It has been proved that the stationary distribution of the Markov chain converges to the target probability distribution $\Prob(\theta)$.
Notice that we only need the unormalized density $\widetilde{\Prob}(.)$, rather than $\Prob(.)$ to compute $\Prob(\theta_i | \theta_1, .., \theta_{i-1}, \theta_{i+1}, .., \theta_{|E|})$. This is the crucial fact that makes the transition of Gibbs sampling efficient in practice.

\subsection{XOR Sampling}

Sampling with XOR constraints \cite{Gomes2006Sampling,Gomes06XORCounting,ermon2013embed,Chakraborty2013} was developed, thanks to the
rapid progress in solving NP-complete problems in the last decade.
Despite the intractability in the worst case, modern MIP and Satisfiability Testing (SAT) solvers are capable
of exploiting the problem structure, and can scale up to
millions of variables \cite{Knot2010,Sontag08,Riedel2008,Poon2006mcsat,Gogate2011SampleSearch}.
XOR sampling transforms the problem of sampling, which is an
\#P-complete problem, into solving satisfiability problems with
additional XOR constraints.

With XOR sampling, the resulting sampling distribution can be proved
to be arbitrarly close to uniform in the unweighted case \cite{Gomes2006Sampling,Chakraborty2013}, and
to be within a  constant factor of the true
probability distribution in the weighted case \cite{ermon2013embed}. 
This is a significant improvement compared with MCMC based techniques,
which often require an exponential amount of samples to reach its
stationary distribution on many problems with intricate combinatorial structure.

XOR sampling is a special fit to our SAA framework. The rationale
behind XOR sampling is to trade computational efficiency with provable
near-optimal quality in the samples obtained.
XOR sampling is more expensive than Gibbs sampling because it has to solve combinatorial problems at each sampling step. 
The recent progress in
constraint programming makes it possible to solve large-scale
NP-complete problems efficiently, therefore the benefit often 
outweights the cost.
This is especially true in our SAA framework.  First, the MIP encoding 
for SAA prevents us from optimizing over millions of examples
simultaneously. Therefore, the samples we use in the optimization has
to be small, but representative enough for SAA to balance over all likely
scenarios.  Second, since the samples are obtained prior to the SAA
optimization, we can afford to spend some time on obtaining good
samples. We can even run several sampling programs in parallel.
In experiments, we found that the time spent at the sampling step was
negligible compared with solving the MIP.

\vspace{5pt}
\noindent {\bf General Idea~} To illustrate the high-level idea of XOR
sampling, let us first consider the unweighted case where the
unnormalized density $\widetilde{\Prob}(\theta)$ is either 1 or
0. Suppose there are $2^l$ different $\theta$ assignments which make
$\widetilde{\Prob}(\theta)=1$. We call all these assignments
``satisfying assignments''. An ideal sampler draws samples from the
set of satisfying assignments $\Delta=\{\theta:
\widetilde{\Prob}(\theta)=1\}$ uniformly at random; i.e., each member
in $\Delta$ has $2^{-l}$ probability to be selected.

At a high level, XOR sampling obtains near-uniform samples by finding
one satisfying assignment subject to additional randomly generated XOR
constraints.
To be specific, we keep adding XOR constraints to the original problem
of finding satisfying assignments. We can prove that in expectation,
each newly added XOR constraint rules out about half of the satisfying
assignments at random.
Therefore, if we start with $2^l$ satisfying assignments in $\Delta$,
after adding $l$ XOR constraints, we will be left with only one
satisfying assignment in expectation, by which time we return this
assignment as our first sample.
%
Because we can prove that the assignments are ruled out at random in each
step, we can guarantee that the last assignment left must be a
randomly chosen one from the original set $\Delta$.
%
Please see \cite{Gomes2006Sampling} for details.


For the weighted case, our goal is to guarantee that the probability
of sampling one $\theta$ is propotional to the unnormalized density
$\widetilde{\Prob}(\theta)$. We implement a horizontal slice technique
to transform a weighted problem into an unweighted one, taking a
2-approximation. For the easiness of illustration, consider the simple
case where $P_0 = \min_{\theta}
\widetilde{\Prob}(\theta)$, $P_k = \max_{\theta}
\widetilde{\Prob}(\theta)$ and $P_k = 2^k P_0$.
Let $\delta=(\delta_0, \ldots, \delta_{k-1}) \in \{0,1\}^k$ be a binary vector of length $k$. We sample $(\theta, \delta)$ uniformly at random from the following set $\Delta_w$ using the afromentioned unweighted XOR sampling:
$$\Delta_w= \{(\theta, \delta): \widetilde{\Prob}(\theta) \leq 2^{i+1} P_0 \Rightarrow \delta_i = 0 \}.$$
If we sample $(\theta, \delta)$ uniformly at random from $\Delta_w$ and  then only return $\theta$, it can be proved that the probability of sampling $\theta$ is propotional to $P_0 2^{i-1}$ when $\widetilde{\Prob}(\theta)$ is sandwiched between $P_0 2^{i-1}$ and $P_0 2^i$. 
We refer the readers to \cite{ermon2013embed} for details. 

\section{Related Work}
Our work is related to an active research field on reliability of physical communication networks under natural disasters and malicious coordinated attacks~\cite{rahnamay2011modeling,neumayer2010network}.
In those papers, probabilistic models are built based on \emph{spatial point processes} to evaluate the reliability of networks with geographically correlated failures. 
However, the problem of optimizing the reliability of networks by taking protection actions is not discussed and the proposed models are too complex to be used to define solvable decision-making problems for network protection.

\citep{Sheldon10} formulate the problem of maximizing the spread of cascades as stochastic network design and develop an algorithm combining SAA and MIP.
In their model, all variables are assumed to be independently distributed so samples of each variable can be drawn independently. 
Their MIP works only when the network is a DAG and has only one source while our MIP works for general directed graphs and an arbitrary number of sources.
\citep{wu2014stochastic} solve the river network design problem with a single source and multiple sources by rounded dynamic programming algorithms \cite{wu2014rounded,wu2014stochastic}.
However, their algorithms only work for tree-structured networks, and they also assume that stochastic events are all independent.
Recently, several fast approximate algorithms based on the primal-dual technique have been developed to solve several deterministic network design problems similar to our problem~(\ref{eq:saa})~\cite{Wu2015,xiaojian2016}, but they do not guarantee to produce the optimal solutions.
It is an interesting future work to see how we can combine their algorithms and XOR sampling techniques to tradeoff the optimality with efficiency on large-scale networks.
\citep{Xue2016SCR} use an optimization algorithm by adding XOR constraints to solve a related landscape connectivity problem. Their optimization problem takes into account of spatial capture re-capture model, which is in a different setting from ours. More importantly, we use XOR sampling within the SAA framework, which is different from their approach. 
\nocite{Xue2016MarginalMAP}0

\section{Experiments}
We test our algorithm on a real-world problem, the so-called Flood Preparation problem for the emergency medical services (EMS), on road networks introduced in paper~\cite{xiaojian2016}. 
Edges of the graph represent road segments.
Nodes represent either the intersections of roads or locations where EMS are requested through phone calls, which are sampled based on the patient population density.
Some edges or road segments are above roadway stream crossing structures, such as culverts, which are vulnerable to floods and earthquakes.
The failure of a crossing structure will disrupt the road segment above it and cause the road to become impassable.
In our study area, only a subset of edges are associated with crossings, each of which defines a binary variable $\bs{\theta}_e$.
The value of $\bs{\theta}_e$ indicates whether the crossing under $e$ fails ($=0$) or not ($=1$).
A protection action can replace a crossing with a sturdy type such that it will not fail during floods.
The goal is to decide which crossings (or which edges out of the subset of edges above crossings) to protect beforehand so patients remain connected to ambulance centers even during floods, that is, we will decide for each edge above a crossing whether to protect it or not.
To measure the connectivity during floods, we use the objective of the multiple source case introduced in section 2.
Each source represents an ambulance center.
Since each center usually has a limited number of ambulances available (sometimes only one ambulance available), a location reachable from multiple centers has a lower risk of no ambulance responding during floods than a location reachable from only one center.
The problem can also be treated as a single source problem in which it is sufficient to connect a node to only one center or being able to connect to more than one center doesn't further increase the objective.
While our methodology applies to both cases, we only show the results for the multiple source case.

We use a similar way as in paper~\cite{neumayer2010network,rahnamay2011modeling} to synthetically generate failure correlation between these crossing structures.
The idea is as follows.
Each crossing may fail with a small chance independently and multiple spatially nearby crossings tend to fail concurrently if affected by a natural disaster.
To generate disaster locations, instead of using spatial point processes as in paper~\cite{neumayer2010network} due to the lack of landscape features, we uniformly generate a small number of disaster centers and define their effect regions with randomly generated radiuses.
All variables within a region are correlated with different levels of strengths (strong or weak).

\begin{figure}[t]
\subfloat[{\small Strong, $20$ sources}]{\includegraphics[height=1.3in]{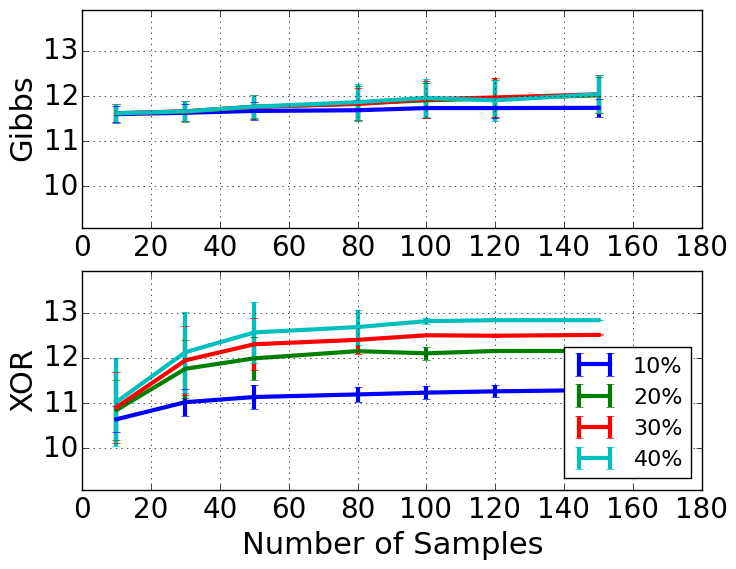}}
\subfloat[{\small Weak, $20$ sources}]{\includegraphics[height=1.3in]{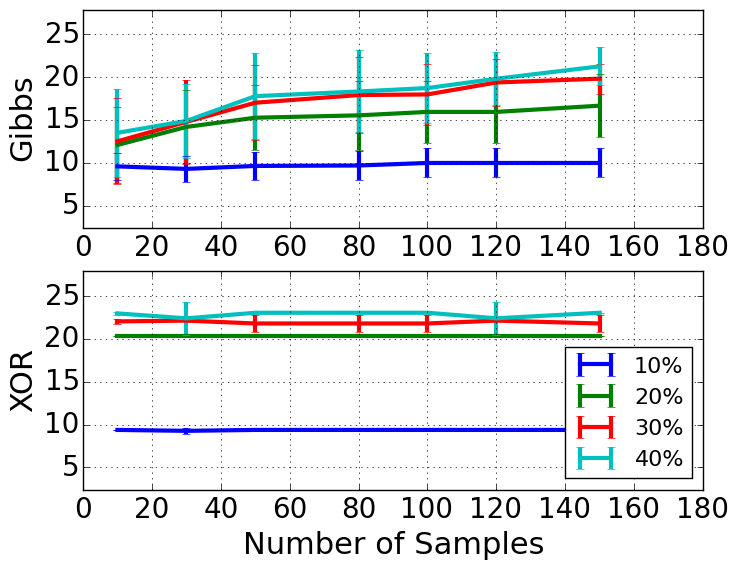}} \\
\subfloat[{\small Weak, $2$ sources}]{\includegraphics[height=1.3in]{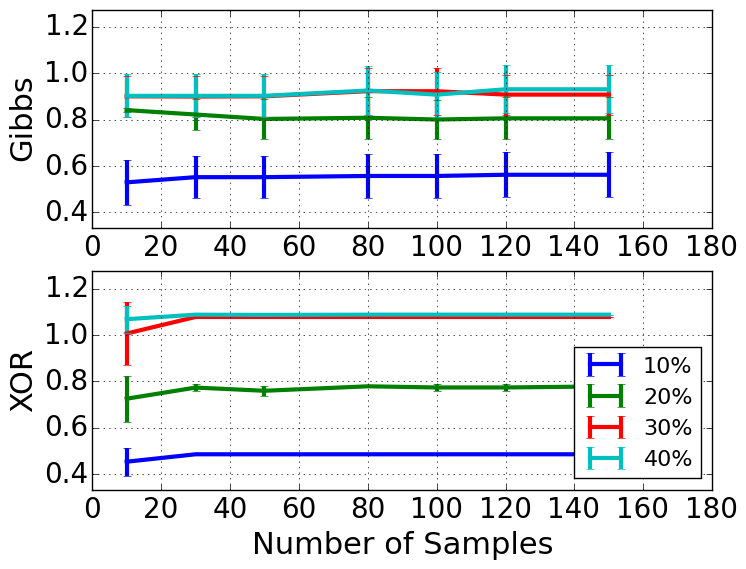}} 
\subfloat[{\small Weak, $10$ sources}]{\includegraphics[height=1.3in]{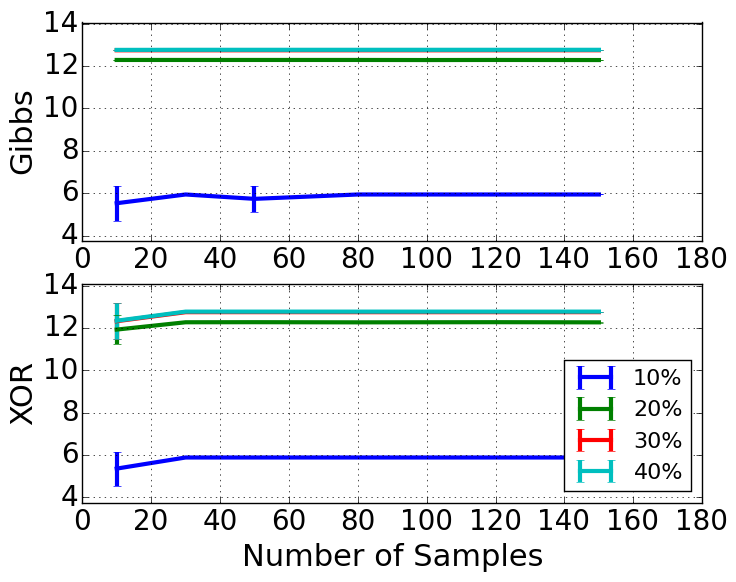}} 
\caption{SAA with the XOR sampler produces better and more stable policies (e.g., larger mean and smaller error bars) than SAA with the Gibbs sampler on small networks. Y axis is the average of $10$ policy values ($\times 10^3$). Variances are illustrated by vertical bars that indicate the interval [mean - std, mean + std] (std: standard deviation). ``Strong'': strong correlation. ``Weak'': weak correlation. Legend: budget sizes as the percentage of the total costs of protecting all edges. }
\label{fig:small_network}
\end{figure}

\vspace{5pt}
\noindent\B{Small Networks~} We test our algorithm on a small network of $502$ edges and $20$ crossings extracted from the road networks, which consists of $20$ binary random variables. 
$20$ is small enough to compute the exact objective value of Equation (\ref{eq:multi_source}) for any given policies so we can compare different policies exactly.
We test on a different number of sources.
For each problem instance, we vary the sample size from $10$ to $180$ and budget size from $10\%$ to $40\%$ of the total cost to protect all crossings.
For sample size $N$, we produce $10$ policies, each computed with $N$  samples independently drawn from the Gibbs and XOR sampler. 
The objective value of each policy is computed with an exact solver.
The mean and variance over $10$ policy values are reported. 
The results of four ``representative'' instances are shown in Fig.\,\ref{fig:small_network}, and the results of other experimental instances show similar trends.
In most of the sample sizes and testing instances, the mean values produced by SAA with the XOR sampler is as good or better than those obtained by SAA with the Gibbs sampler. 
Moreover, the policy produced by the XOR sampler is more stable in the sense that the variance of policy values is smaller than the variance produced by the Gibbs sampler, as shown in Fig.\,\ref{fig:small_network}(a-c). 
The results by the Gibbs sampler are comparable to the results by the XOR sampler in some instances.
In Fig.\,\ref{fig:small_network}(d), both methods produced almost identical means and variances.
In Fig.\,\ref{fig:small_network}(a), the XOR sampler gives smaller means with $10$ samples but quickly outperforms the Gibbs sampler with more samples.
In Fig.\,\ref{fig:small_network}(a)(c), for the $10\%$ budget cases, the Gibbs sampler produces larger means but appears very unstable.
In most of these results, as the number of samples increases, the variance by the XOR sampler quickly reduces to $0$ while the variance by the Gibbs sampler remains large (e.g., in Fig.\,\ref{fig:small_network}(b) and (c)).

\begin{figure}[t]
\centering
\includegraphics[height=1.6in]{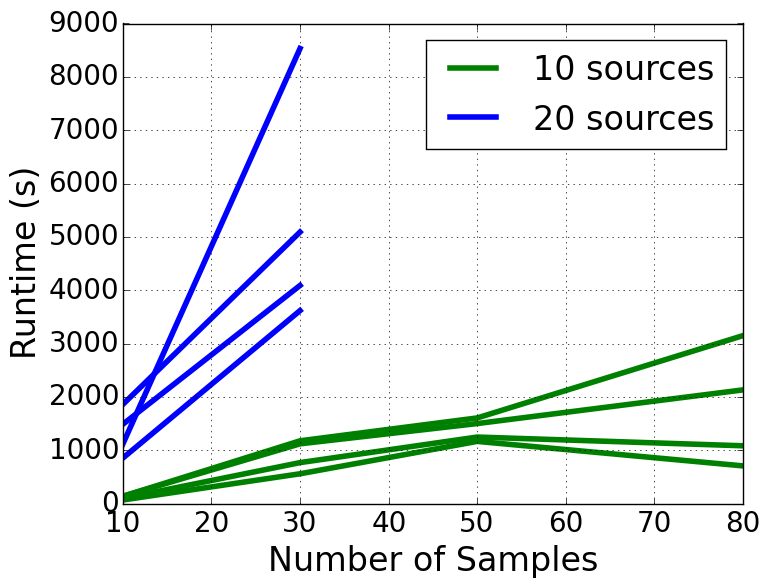}
\caption{Runtime of MIP versus sample sizes. Lines in the same color represent different budget sizes. The runtime value is the average of $30$ experiments, each with independently drawn samples.}
\label{fig:runtime}
\end{figure}

\vspace{5pt}
\noindent\B{Large Networks~} We test on a large road network of $1,562$ edges and $81$ binary random variables where the exact policy value is not available.
Fig.\,\ref{fig:runtime} shows that the MIP quickly becomes unscalable as the sample size and the number of sources increase.
CPLEX takes around $2.5$ hours to solve the MIP of $30$ samples and $20$ sources and cannot finish within $4$ hours for $50$ samples and $20$ sources.
Therefore, to work with $20$ or more sources or networks of larger sizes, it is important that good policies can be obtained by a small number of samples (e.g., $30$ samples for $20$ sources).
For $10$ source case, it is interesting that the runtime decreases a little bit after a certain number of samples for some budget sizes.

Also, for both samplers, we do experiments to find out the number of samples that are sufficient for the sample average method to obtain a reasonably good estimation of policy value.
To test whether a sequence of values converges, we borrow the idea from Cauchy sequence test to see if the difference of two subsequent values converges to zero. 
The results are shown in Fig.\,\ref{fig:eval}.
We create a sequence of sample sizes $\{N_1, N_2,...\}$ varying from $10$ to $5000$.
In Fig.\,\ref{fig:eval}, the value reported on y-axis at $x=N_i$ equals the difference between the average of $N_i$ policy values and the average of $N_{i-1}$ policy values, each policy value estimated by an independent drawn sample of $\bs{\theta}$. 
If the difference goes to zero, it implies the sequence converges. 
As one can see from the figure, the value difference for both the XOR and the Gibbs sampling narrows down to a small interval while the value difference for the XOR sampler converges to a much smaller range near $0$ and much faster than for the Gibbs sampler. 
The results also imply that the empirical mean of $5000$ samples can produce good estimations of the policy value.

\begin{figure}[t]
\subfloat[The Gibbs sampler]{\includegraphics[height=1.3in]{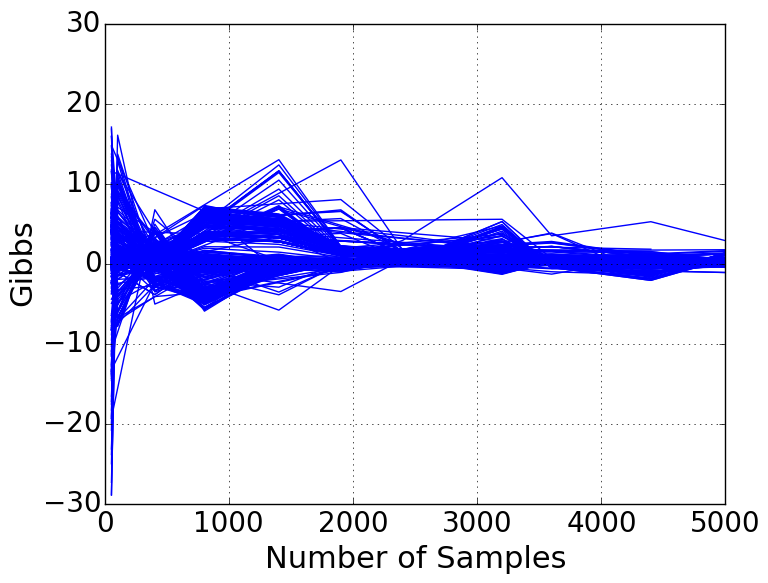}}
\subfloat[The XOR sampler]{\includegraphics[height=1.3in]{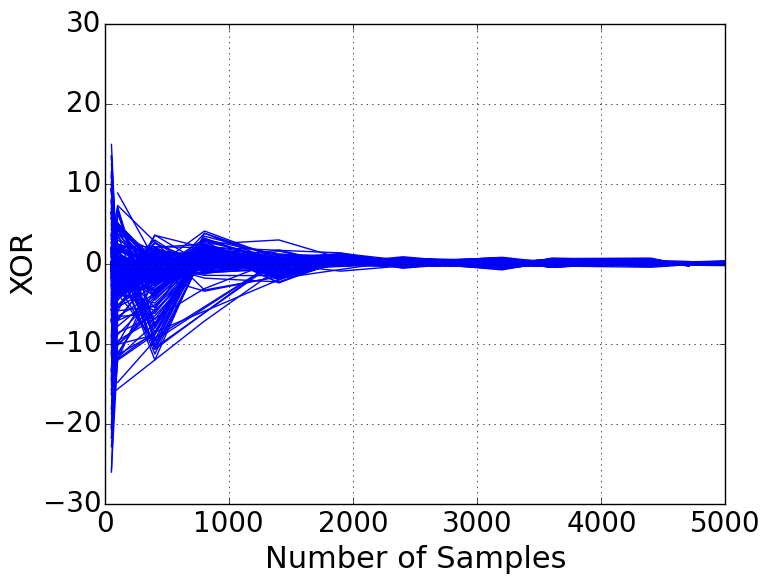}}
\caption{The effect of sample sizes on estimation of policy values. Each line corresponds to a policy.  For one policy, the value of Y axis is the difference of two subsequent estimated policy values. For both the XOR and the Gibbs samplers, the difference reduces to a small value with $5,000$ samples. The XOR sampler converges better and faster comparing to the Gibbs sampler. }
\label{fig:eval}
\end{figure}

At last, we compare the solution qualities of our algorithm on large networks. Two representative results are shown in Fig.\,\ref{fig:large_net}. 
Again, the results of instances that are not shown in the figure have similar trends. 
Since we can not calculate the exact value of a policy (e.g., an expectation) on the large network, we estimate the expectation using $5000$ empirical values of the policy, each calculated by an independent sample.
As demonstrated, $5000$ samples are sufficient to obtain a good estimation.
In the figure, the XOR sampler outperforms the Gibbs sampler with better and stable policy values, in a similar way as for the small networks. 
\begin{figure}[t]
\centering
\subfloat[$1$ source]{\includegraphics[height=1.3in]{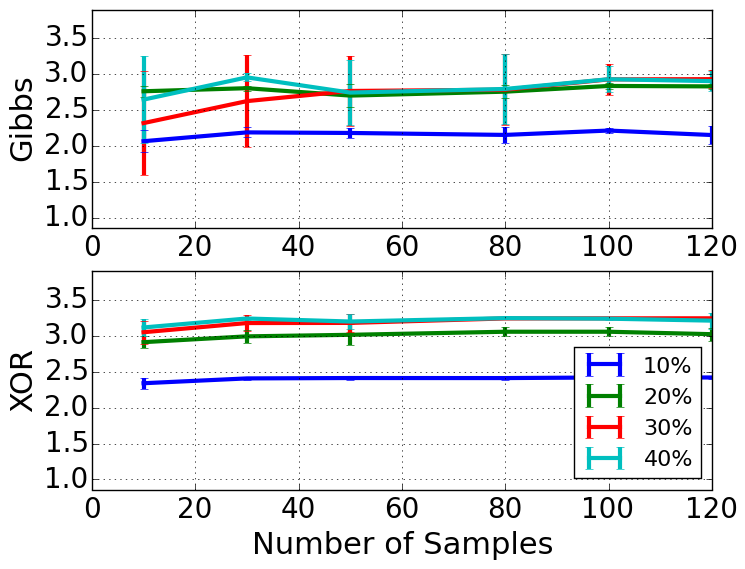}}
\subfloat[$10$ sources]{\includegraphics[height=1.3in]{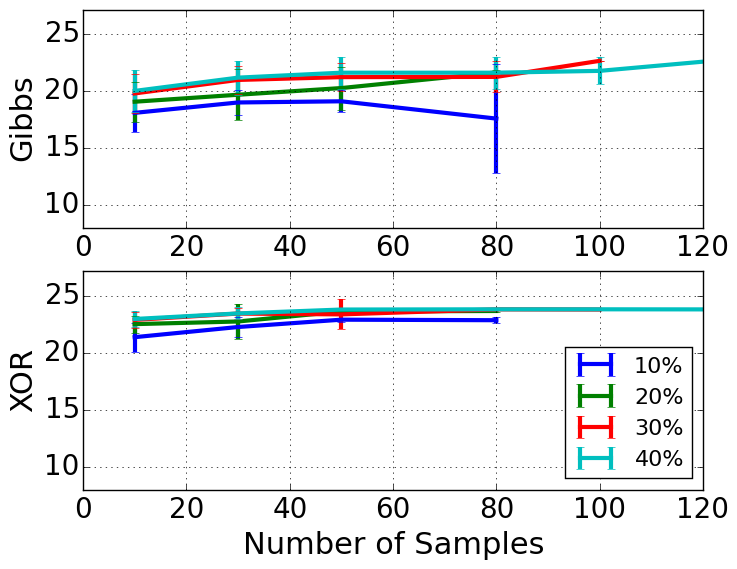}}
\caption{SAA with the XOR sampler produces better and more stable policies than SAA with the Gibbs sampler on large networks with $1$ and $10$ sources. Axes have the same meanings as in Fig.\,\ref{fig:small_network}.}
\label{fig:large_net}
\end{figure}
In conclusion, the XOR sampler produces solutions with higher qualities and is less sensitive to sample variance.

\section{Conclusion}
Previous approaches for solving stochastic network design problems rely on a restricted assumption that the states of edges are independently distributed. An SAA based algorithm can produce high-quality solutions by taking advantage of this restricted assumption, although they only scaling up to a small number of samples. 
In this paper, we relax the independence assumption and propose a more general problem framework by allowing correlated stochastic events to affect the states of edges in networks.  
Novel algorithms have been developed based on the sample average method and various samplers. We show that our algorithm can produce high-quality solutions with a small number of samples if these samples are obtained from a XOR sampling technique.
Testing on a real-world flood preparation problem for EMS, empirical results show that SAA with the XOR sampler produces better and more stable solutions compared with SAA with a Gibbs sampler.

\section*{Acknowledgements}
This research was supported by the National Science Foundation
under Grant No. CNS-0832782, CNF-1522054, CNS-1059284, ARO grant W911-NF-14-1-0498
and the Atkinson Center for a Sustainable Future.

\bibliographystyle{named}
\bibliography{ijcai17}

\begin{thebibliography}{}

\bibitem[\protect\citeauthoryear{Chakraborty \bgroup \em et al.\egroup
  }{2013}]{Chakraborty2013}
Supratik Chakraborty, Kuldeep~S. Meel, and Moshe~Y. Vardi.
\newblock A scalable and nearly uniform generator of sat witnesses.
\newblock In {\em Proceedings of the 25th International Conference on Computer
  Aided Verification}, 2013.

\bibitem[\protect\citeauthoryear{Dyer and
  Stougie}{2006}]{Dyer2006StochasticProgramming}
Martin Dyer and Leen Stougie.
\newblock Computational complexity of stochastic programming problems.
\newblock {\em Mathematical Programming}, 106(3):423--432, 2006.

\bibitem[\protect\citeauthoryear{Ermon \bgroup \em et al.\egroup
  }{2013}]{ermon2013embed}
Stefano Ermon, Carla~P Gomes, Ashish Sabharwal, and Bart Selman.
\newblock Embed and project: Discrete sampling with universal hashing.
\newblock In {\em Advances in Neural Information Processing Systems}, pages
  2085--2093, 2013.

\bibitem[\protect\citeauthoryear{Gogate and
  Dechter}{2011}]{Gogate2011SampleSearch}
Vibhav Gogate and Rina Dechter.
\newblock Samplesearch: Importance sampling in presence of determinism.
\newblock {\em Artificial Intelligence Journal}, 2011.

\bibitem[\protect\citeauthoryear{Gomes \bgroup \em et al.\egroup
  }{2006}]{Gomes06XORCounting}
Carla~P. Gomes, Ashish Sabharwal, and Bart Selman.
\newblock Model counting: A new strategy for obtaining good bounds.
\newblock In {\em Proceedings of the 21st National Conference on Artificial
  Intelligence}, 2006.

\bibitem[\protect\citeauthoryear{Gomes \bgroup \em et al.\egroup
  }{2007}]{Gomes2006Sampling}
Carla~P Gomes, Ashish Sabharwal, and Bart Selman.
\newblock Near-uniform sampling of combinatorial spaces using xor constraints.
\newblock In {\em Advances in Neural Information Processing Systems}, 2007.

\bibitem[\protect\citeauthoryear{Kempe \bgroup \em et al.\egroup
  }{2003}]{Kempe03}
David Kempe, Jon Kleinberg, and {\'E}va Tardos.
\newblock Maximizing the spread of influence through a social network.
\newblock In {\em Proceedings of the 9th ACM SIGKDD International Conference on
  Knowledge Discovery and Data Mining}, pages 137--146, 2003.

\bibitem[\protect\citeauthoryear{Kleywegt \bgroup \em et al.\egroup
  }{2002}]{SAA02}
Anton~J. Kleywegt, Alexander Shapiro, and Tito Homem-de Mello.
\newblock The sample average approximation method for stochastic discrete
  optimization.
\newblock {\em SIAM Journal on Optimization}, 12:479--502, February 2002.

\bibitem[\protect\citeauthoryear{Kumar \bgroup \em et al.\egroup
  }{2012}]{Kumar12}
Akshat Kumar, Xiaojian Wu, and Shlomo Zilberstein.
\newblock Lagrangian relaxation techniques for scalable spatial conservation
  planning.
\newblock In {\em Proceedings of the 26th Conference on Artificial
  Intelligence}, pages 309--315", 2012.

\bibitem[\protect\citeauthoryear{Neeson \bgroup \em et al.\egroup
  }{2015}]{neeson2015enhancing}
Thomas~M Neeson, Michael~C Ferris, Matthew~W Diebel, Patrick~J Doran, Jesse~R
  O’Hanley, and Peter~B McIntyre.
\newblock Enhancing ecosystem restoration efficiency through spatial and
  temporal coordination.
\newblock {\em Proceedings of the National Academy of Sciences},
  112(19):6236--6241, 2015.

\bibitem[\protect\citeauthoryear{Neumayer and
  Modiano}{2010}]{neumayer2010network}
Sebastian Neumayer and Eytan Modiano.
\newblock Network reliability with geographically correlated failures.
\newblock In {\em INFOCOM, 2010 Proceedings IEEE}, pages 1--9. IEEE, 2010.

\bibitem[\protect\citeauthoryear{Peeta \bgroup \em et al.\egroup
  }{2010}]{peeta2010pre}
Srinivas Peeta, F.~Sibel~Salman, Dilek Gunnec, and Kannan Viswanath.
\newblock Pre-disaster investment decisions for strengthening a highway
  network.
\newblock {\em Computers and Operations Research}, 37(10):1708--1719, 2010.

\bibitem[\protect\citeauthoryear{Pipatsrisawat and Darwiche}{2010}]{Knot2010}
Knot Pipatsrisawat and Adnan Darwiche.
\newblock On modern clause-learning satisfiability solvers.
\newblock {\em Journal of Automated Reasoning}, 2010.

\bibitem[\protect\citeauthoryear{Poon and Domingos}{2006}]{Poon2006mcsat}
Hoifung Poon and Pedro Domingos.
\newblock Sound and efficient inference with probabilistic and deterministic
  dependencies.
\newblock In {\em Proceedings of the 21st Conference on Artificial
  Intelligence}, 2006.

\bibitem[\protect\citeauthoryear{Rahnamay-Naeini \bgroup \em et al.\egroup
  }{2011}]{rahnamay2011modeling}
Mahshid Rahnamay-Naeini, Jorge~E Pezoa, Ghady Azar, Nasir Ghani, and Majeed~M
  Hayat.
\newblock Modeling stochastic correlated failures and their effects on network
  reliability.
\newblock In {\em Computer Communications and Networks (ICCCN), 2011
  Proceedings of 20th International Conference on}, pages 1--6. IEEE, 2011.

\bibitem[\protect\citeauthoryear{Riedel}{2008}]{Riedel2008}
Sebastian Riedel.
\newblock Improving the accuracy and efficiency of map inference for markov
  logic.
\newblock In {\em Proceedings of the Twenty-Fourth Conference on Uncertainty in
  Artificial Intelligence}, 2008.

\bibitem[\protect\citeauthoryear{Sheldon \bgroup \em et al.\egroup
  }{2010}]{Sheldon10}
Daniel Sheldon, Bistra Dilkina, Adam Elmachtoub, Ryan Finseth, Ashish
  Sabharwal, Jon Conrad, Carla Gomes, David Shmoys, William Allen, Ole
  Amundsen, and William Vaughan.
\newblock Maximizing the spread of cascades using network design.
\newblock In {\em Proceedings of the 26th Conference on Uncertainty in
  Artificial Intelligence}, pages 517--526, 2010.

\bibitem[\protect\citeauthoryear{Sontag \bgroup \em et al.\egroup
  }{2008}]{Sontag08}
David Sontag, Talya Meltzer, Amir Globerson, Yair Weiss, and Tommi Jaakkola.
\newblock Tightening {LP} relaxations for {MAP} using message-passing.
\newblock In {\em 24th Conference in Uncertainty in Artificial Intelligence},
  2008.

\bibitem[\protect\citeauthoryear{Wu \bgroup \em et al.\egroup
  }{2014a}]{wu2014rounded}
Xiaojian Wu, Daniel Sheldon, and Shlomo Zilberstein.
\newblock Rounded dynamic programming for tree-structured stochastic network
  design.
\newblock In {\em Proceedings of the 28th Conference on Artificial
  Intelligence}, 2014.

\bibitem[\protect\citeauthoryear{Wu \bgroup \em et al.\egroup
  }{2014b}]{wu2014stochastic}
Xiaojian Wu, Daniel Sheldon, and Shlomo Zilberstein.
\newblock Stochastic network design in bidirected trees.
\newblock In {\em Advances in Neural Information Processing Systems}, pages
  882--890, 2014.

\bibitem[\protect\citeauthoryear{Wu \bgroup \em et al.\egroup }{2015}]{Wu2015}
Xiaojian Wu, Daniel Sheldon, and Shlomo Zilberstein.
\newblock Fast combinatorial algorithm for optimizing the spread of cascades.
\newblock In {\em Proceedings of the 24th International Joint Conference on
  Artificial Intelligence (IJCAI)}, pages 2655--2661, 2015.

\bibitem[\protect\citeauthoryear{Wu \bgroup \em et al.\egroup
  }{2016}]{xiaojian2016}
Xiaojian Wu, Daniel~R Sheldon, and Shlomo Zilberstein.
\newblock Optimizing resilience in large scale networks.
\newblock In {\em Proceedings of the 30th Conference on Artificial
  Intelligence}, 2016.

\bibitem[\protect\citeauthoryear{Wu \bgroup \em et al.\egroup
  }{2017}]{wu2017robust}
Xiaojian Wu, Akshat Kumar, Daniel Sheldon, and Shlomo Zilberstein.
\newblock Robust optimization for tree-structured stochastic network design.
\newblock In {\em Proceedings of the 31st Conference on Artificial
  Intelligence}, 2017.

\bibitem[\protect\citeauthoryear{Xue \bgroup \em et al.\egroup
  }{2016}]{Xue2016MarginalMAP}
Yexiang Xue, Zhiyuan Li, Stefano Ermon, Carla~P. Gomes, and Bart Selman.
\newblock Solving marginal map problems with np oracles and parity constraints.
\newblock In {\em Advances in Neural Information Processing Systems}, 2016.

\bibitem[\protect\citeauthoryear{Xue \bgroup \em et al.\egroup
  }{2017}]{Xue2016SCR}
Yexiang Xue, Xiaojian Wu, Dana Morin, Bistra Dilkina, Angela Fuller, J.~Andrew
  Royle, and Carla~P. Gomes.
\newblock Dynamic optimization of landscape connectivity embedding
  spatial-capture-recapture information.
\newblock In {\em Proceedings of the 31th Conference on Artificial
  Intelligence}, 2017.

\end{thebibliography}

\end{document}